\documentclass[letterpaper,10pt,conference]{ieeeconf}
\IEEEoverridecommandlockouts       
\usepackage{amssymb}
\usepackage{amsmath}

\usepackage{amsthm}
\usepackage[english]{babel}
\usepackage{float}
\usepackage{graphicx}
\usepackage{hyperref}
\usepackage{mathtools}
\usepackage{booktabs}
\usepackage{afterpage}
\usepackage{dblfloatfix}
\usepackage{colortbl}

\usepackage{filecontents}
\usepackage[noadjust]{cite}
\usepackage[font=footnotesize,labelfont=footnotesize]{subcaption}
\usepackage[font=footnotesize,labelfont=footnotesize]{caption}
\usepackage{comment}
\usepackage{authblk}
\usepackage{multirow}
\usepackage{xcolor}
\usepackage{algorithm}
\usepackage{algorithmic}
\usepackage{xspace}
\usepackage{graphicx}
\usepackage[english]{babel}

\definecolor{darkblue}{rgb}{0.15,0.15,0.55}
\definecolor{lightgrey}{rgb}{0.75,0.75,0.75}

\newtheorem{theorem}{Theorem}

\newtheorem{lemma}[theorem]{Lemma}
\theoremstyle{definition}
\newtheorem{definition}{Definition}
\newtheorem{remark}{Remark}
\newtheorem{proposition}{Proposition}

\begin{document}
\title{\LARGE \bf Capability-Aware Heterogeneous Control Barrier Functions for Decentralized Multi-Robot Safe Navigation\vspace{-12pt}}

\author{Joonkyung Kim$^1$, Yanze Zhang$^2$, Wenhao Luo$^2$, Yiwei Lyu$^{1,*}$
\thanks{$^{1}$Department of Computer Science and Engineering, Texas A\&M University, $^{2}$Department of Computer Science, University of Illinois Chicago, $^*$Corresponding author: {\tt\small yiweilyu@tamu.edu}}
}


\maketitle 

\begin{abstract}

Safe navigation for multi-robot systems requires enforcing safety without sacrificing task efficiency under decentralized decision-making. Existing decentralized methods often assume robot homogeneity, making shared safety requirements non-uniformly interpreted across heterogeneous agents with structurally different dynamics, which could lead to avoidance obligations not physically realizable for some robots and thus cause safety violations or deadlock. 
In this paper, we propose Capability-Aware Heterogeneous Control Barrier Function (CA-HCBF),
a decentralized framework for consistent safety enforcement and capability-aware coordination in heterogeneous robot teams. We derive a canonical second-order control-affine representation that unifies holonomic and nonholonomic robots under acceleration-level control via canonical transformation and backstepping, preserving forward invariance of the safe set while avoiding relative-degree mismatch across heterogeneous dynamics.
We further introduce a support-function-based directional capability metric that quantifies each robot's ability to follow its motion intent, deriving a pairwise responsibility allocation that distributes the safety burden proportionally to each robot's motion capability. A feasibility-aware clipping mechanism further constrains the allocation to each agent's physically achievable range, mitigating infeasible constraint assignments common in dense decentralized CBF settings. Simulations with up to 30 heterogeneous robots and a physical multi-robot demonstration show improved safety and task efficiency over baselines, validating real-world applicability across robots with distinct kinematic constraints. \href{https://joonkyung-kim.github.io/ca-hcbf-project-page/}{\textcolor{magenta}{[Project Page]}}
\vspace{-0.8em}

\end{abstract}

\section{Introduction}
\label{sec:intro}
\vspace{-0.5em}


With recent advances in robotics, multi-robot systems (MRS) have gained significant attention for collaborative tasks such as exploration and mapping~\cite{gao2020random}, search-and-rescue~\cite{drew2021multi, queralta2020search_rescue}, autonomous delivery~\cite{camisa2022pickup_delivery}, and warehouse automation~\cite{wurman2008coordinating, li2021lifelong}. However, as the number of robots in a shared workspace increases, ensuring safe navigation becomes significantly more complex: higher robot density amplifies collision risk, while overly conservative avoidance degrades team efficiency.

\begin{figure}[!t]
    \captionsetup{skip=0pt}
    \centering
    \begin{subfigure}{\linewidth}
        \centering
        \includegraphics[width=0.98\textwidth]{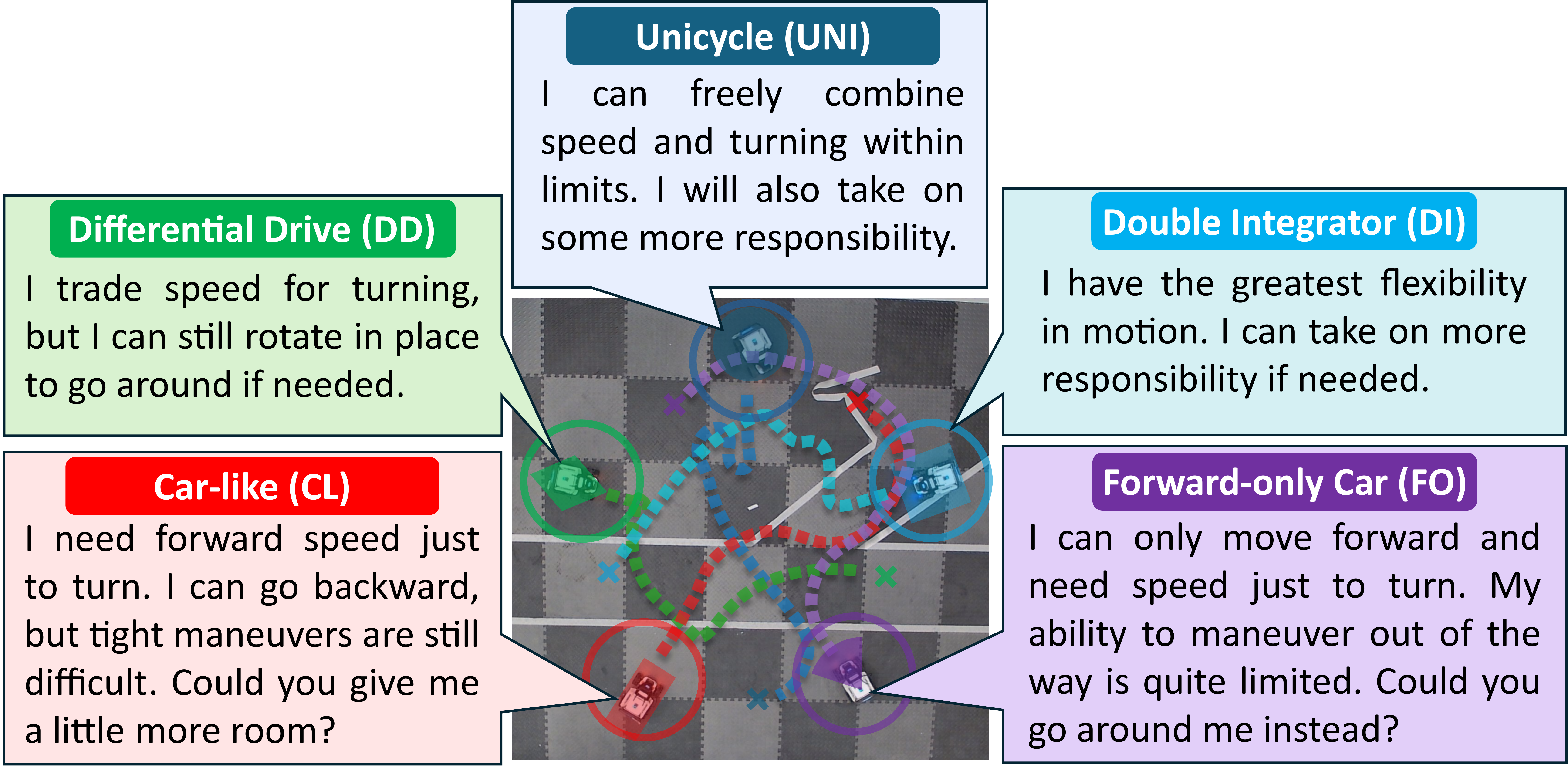}
        \caption{}
        \label{fig:dialogue}
    \end{subfigure}
    
    \vspace{0pt} 
    
    \begin{subfigure}{\linewidth}
        \centering
        \includegraphics[width=0.99\textwidth]{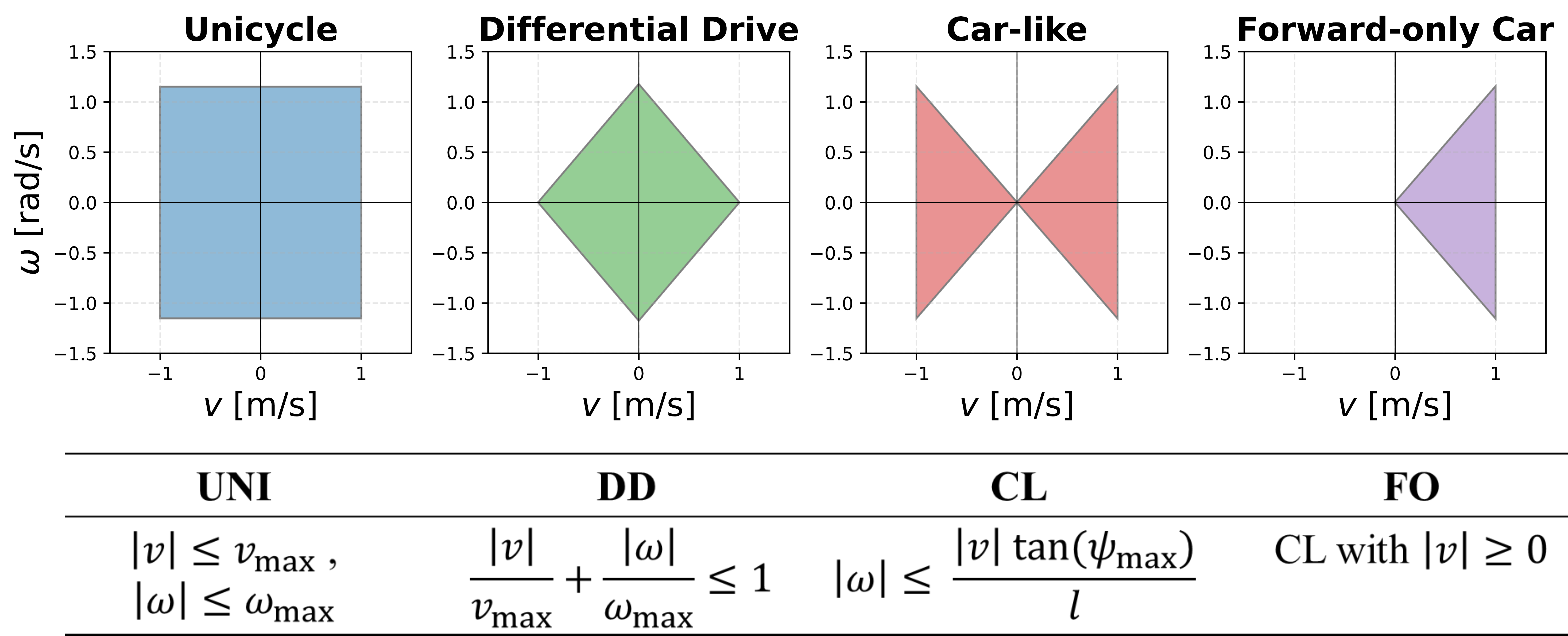}
        \caption{}
        \label{fig:canonical_viz} \vspace{2pt}
    \end{subfigure}
    
    \caption{
    \textbf{(a)} Five mobile robots with heterogeneous kinematic classes (DI, UNI, DD, CL, FO) navigate toward individual goals. Each robot's statement reflects its directional motion capability, ranging from the unconstrained DI to the restricted CL and FO, motivating the capability-aware responsibility allocation in Section~\ref{sec:method}. 
    \textbf{(b)} Feasible control sets $\mathcal{U}_i$ in the $(v, \omega)$ space and their analytical definitions for the four nonholonomic kinematic classes (Section~\ref{subsec:canonical}). The DI is omitted as it is holonomic and not parameterized by $(v, \omega)$.}
    \label{fig:overview} \vspace{-2em}
\end{figure}


A prominent class of methods for enforcing safety in MRS is optimization-based safety filtering, which modifies nominal control commands to satisfy formally specified safety constraints. Within this class, control barrier function (CBF)~\cite{ames2019control, wang2017safety} approaches have become widely used as they provide a principled mechanism to maintain forward invariance of a safe set by solving a quadratic programming (QP) at each control step~\cite{ames2016control_qp}. CBF-based navigation can be implemented in centralized or decentralized forms. Centralized formulations better coordinate team behavior but scale poorly. Decentralized formulations scale more naturally, but their performance depends critically on how safety responsibilities are distributed among peer agents~\cite{wang2017safety, zeng2021safety, 
tan2021distributed}. 


Despite this progress, safe and efficient decentralized navigation for heterogeneous robot teams remains an open challenge. Most decentralized MRS navigation pipelines implicitly rely on a homogeneity assumption: robots share the same kinematic structure and can interpret and realize avoidance requirements in comparable ways~\cite{dawson2023safe, wang2017safety}. In realistic heterogeneous teams, robots can differ not only in parameters (e.g., speed limits or footprint) but also in structural dynamics (e.g., holonomic vs. nonholonomic motion, different actuation constraints, and different effective control authority), as illustrated in Fig.~\ref{fig:overview}(a). These differences complicate decentralized safety filtering in two ways: (i) a shared safety requirement may not translate into a consistent, implementable constraint across robots with different dynamics, and (ii) naively treating all robots as equally capable can yield inefficient coordination, where robots with limited maneuverability are assigned aggressive avoidance roles while more agile robots are underutilized.

We argue that safe and efficient decentralized navigation for heterogeneous robot teams hinges on two requirements: a safety mechanism that remains well-defined across structurally different dynamics, and a coordination mechanism that distributes avoidance effort according to each robot’s directional motion capability, alleviating the burden of infeasible control assignments.
In this paper, we propose \textit{Capability-Aware Heterogeneous Control Barrier Function} (\textit{CA-HCBF}), a decentralized framework addressing both requirements. Our \textbf{main contributions} include: (1) a unified second-order CBF formulation for structurally heterogeneous robots, which unifies five holonomic and nonholonomic models (Double Integrator, Unicycle, Differential Drive, Car-like, and Forward-only car) via canonical transformation and backstepping, and enables consistent safety enforcement under acceleration-level control across heterogeneous dynamics, (2) a support-function-based directional capability metric for pairwise responsibility allocation with a feasibility-aware safeguard, distributing avoidance burden according to each robot's motion capability while reducing infeasible constraint assignments in dense decentralized interactions, and (3) validation through large-scale simulations with up to 30 heterogeneous robots and a physical multi-robot experiment, demonstrating that CA-HCBF maintains safety at higher densities and yields more efficient trajectories than other baselines (fewer freezes and faster task completion), while remaining implementable on real platforms with distinct kinematic constraints. \vspace{-0.5em}

\section{Related Work}
\label{sec:relWork}

Multi-agent collision avoidance has been studied through several classes of approaches reflecting different tradeoffs among safety guarantees, scalability, and cooperation. 
Velocity obstacle (VO)-based methods~\cite{fiorini1998vo, van2008rvo, van2011orca} are computationally efficient but provide guarantees only in velocity space under finite-horizon assumptions. Artificial Potential Field (APF)-based methods~\cite{khatib1986apf, zhang2023rpf} offer lightweight reactive avoidance but lack formal continuous-time safety guarantees and may suffer from local minima. Model Predictive Control (MPC)-based approaches~\cite{dai2017distributed_mpc, luis2020online_mpc} enforce safety over a receding horizon but scale poorly with the number of agents. CBFs~\cite{ames2019control} act as real-time safety filters that minimally modify a nominal controller to ensure forward invariance of a safe set, and admit both centralized and decentralized implementations~\cite{wang2017safety}. Centralized formulations yield globally coordinated solutions but face quadratic growth in pairwise constraints, limiting scalability. Decentralized formulations allow each agent to solve an independent QP, improving scalability but often leading to less cooperative and conservative behavior as each agent optimizes only its own input.


This tradeoff motivates responsibility allocation strategies, which distribute pairwise constraint enforcement between agents~\cite{cosner2023learning_res, remy2025learning_res_alloc}, extending the reciprocal avoidance idea~\cite{van2008rvo} to asymmetric, capability-aware distributions. Existing methods exploit parametric heterogeneity within a shared model class~\cite{wang2016safety, lyu2022responsibility}. Another fundamental challenge arises under structural heterogeneity, where agents differ in kinematic class or relative degree, causing CBF constraints derived for one model to not uniformly constrain another~\cite{xiao2019hocbf, liu2024auxiliary}. Some approaches handle this by modeling neighbors as worst-case dynamic obstacles~\cite{kim2025srcbf}, at the cost of non-cooperative and overly conservative behavior. Extending responsibility allocation to structurally heterogeneous teams therefore requires a unified safety formulation that consistently accommodates 
different dynamic models~\cite{tan2022compatibility, marley2024hybrid}.


Another persistent challenge in decentralized CBF frameworks is QP infeasibility: as multiple pairwise constraints become active simultaneously, their intersection may become empty, rendering the decoupled QP infeasible~\cite{zeng2021pairwise_feasibility}. Decentralized agents optimize only over their own input space, making infeasibility more likely than in centralized formulations that search for solutions in the jointly higher-dimensional action space~\cite{tan2021distributed, wang2017safety}. Structural heterogeneity further sharpens this risk, as nonholonomic agents have limited admissible control sets and higher order relative degree constraints narrow the feasible region~\cite{liu2024auxiliary, xiao2019hocbf, nguyen2016exponential}. Existing strategies such as soft constraint relaxation~\cite{ames2016control_qp}, priority-based hierarchies~\cite{lee2023hierarchical}, and margin inflation~\cite{chen2020guaranteed} address this by trading off formal guarantees or nominal performance. An alternative is to incorporate feasibility awareness directly into responsibility allocation, verifying that the assigned burden falls within each agent's feasibility-compatible interval and providing a fallback strategy when combined capability is insufficient. 
\vspace{-0.5em}

\section{Preliminary} \label{sec:prelim}

We briefly review the CBF framework for multi-robot collision avoidance and the canonical velocity-level kinematic representation for nonholonomic mobile robots, which forms the basis of the unified acceleration-level formulation introduced in Section~\ref{sec:method}. \vspace{-0.3em}

\subsection{Control Barrier Function (CBF)}
\label{ssec:cbf}

CBFs~\cite{ames2019control} provide a 
systematic approach to characterize admissible control inputs that guarantee safety for control-affine systems $\dot{\mathbf{x}} = f(\mathbf{x}) + g(\mathbf{x}) \mathbf{u}$, where $\mathbf{x} \in \mathcal{X} \subset \mathbb{R}^n$ is the system state, $\mathbf{u} \in \mathcal{U} \subset \mathbb{R}^m$ is the control input, $\mathcal{U}$ is the set of admissible inputs, and $f$, $g$  are locally Lipschitz. Given a continuously differentiable function $h : \mathbb{R}^n \to \mathbb{R}$, define the safe set $\mathcal{H} = \{ \mathbf{x} \in \mathcal{X} : h(\mathbf{x}) \ge 0 \}$.
The time derivative of $h$ along system trajectories is
$\dot{h}(\mathbf{x}, \mathbf{u}) = L_f h(\mathbf{x}) + L_g h(\mathbf{x}) \mathbf{u}$,
where $L_f h$ and $L_g h$ denote Lie derivatives along $f$ and $g$.

\begin{definition}[Control Barrier Function~\cite{ames2019control}]
\label{def:cbf}
The function $h$ is a \emph{CBF} if there exists an extended class 
$\mathcal{K}$ function $\kappa(\cdot)$ such that for all 
$\mathbf{x} \in \mathcal{X}$,
\begin{equation}
    \sup_{\mathbf{u} \in \mathcal{U}} \big[ L_f h(\mathbf{x}) 
    + L_g h(\mathbf{x})\mathbf{u} \big] \ge -\kappa(h(\mathbf{x})).
\end{equation}
\end{definition}
Adopting $\kappa(h) = \gamma h$ with user-defined parameter $\gamma \geq 0$~\cite{ames2019control, wang2017safety}, the admissible safe control set becomes
\begin{equation}
    \mathcal{B}(\mathbf{x}) = \left\{ \mathbf{u} \in \mathcal{U} : 
    L_f h + L_g h\,\mathbf{u} + \gamma h(\mathbf{x}) \geq 0 \right\}.
\end{equation}
Any Lipschitz controller satisfying $\mathbf{u}(\mathbf{x}) \in 
\mathcal{B}(\mathbf{x})$ renders $\mathcal{H}$ forward 
invariant~\cite{ames2019control}.

This framework extends to multi-robot collision 
avoidance~\cite{wang2016safety}. For each pair $(i,j)$, we define the pairwise safety function with the safety margin $R_\text{safe} \in \mathbb{R}$, where $\mathbf{p}_i, \mathbf{p}_j \in \mathbb{R}^2$ denote the positions of agents $i$ and $j$ extracted from their states $\mathbf{x}_i, \mathbf{x}_j$: \vspace{-0.5em}
\begin{equation}
    h_{ij}(\mathbf{p}_i, \mathbf{p}_j) := \|\mathbf{p}_i - 
    \mathbf{p}_j\|^2 - R_{\mathrm{safe}}^2.
    \label{eq:hij}
\end{equation} 
We denote the safe set $\mathcal{C}_{ij} := \{(\mathbf{p}_i, \mathbf{p}_j) \mid h_{ij} \geq 0\}$. The admissible safe control set enforcing forward invariance of $\mathcal{C}_{ij}$ is \vspace{-0.2em}
\begin{equation}
\begin{aligned}
    \mathcal{B}_{ij}(\mathbf{x}) = \big\{ \mathbf{u} \in \mathcal{U} 
    :\, &L_f h_{ij} + L_g h_{ij}\,\mathbf{u} 
    + \gamma_{ij}\,h_{ij} \geq 0 \big\},
\end{aligned} \label{eq:foward_control_set}
\end{equation}  
where $\gamma_{ij} \geq 0$ governs the response aggressiveness near the safety boundary. Any controller satisfying $\mathbf{u} \in 
\mathcal{B}_{ij}(\mathbf{x})$ for all pairs $(i,j)$ guarantees 
collision-free operation~\cite{ames2019control, wang2017safety}.

\subsection{Canonical Representation for Nonholonomic Agents}
\label{subsec:canonical}

Unlike holonomic agents, nonholonomic wheeled robots are subject to kinematic constraints that prohibit instantaneous lateral motion, and thus impose various constraints in terms of achievable $\mathbf{u}$ when designing effective individual robot controller to collaboratively enforce Eq. (4) for safety. 
Although these platforms differ in actuation mechanisms and physical configurations, they admit a common simplified kinematic representation under the assumption that steering dynamics are handled by a lower-level controller~\cite{lynch2017modern_robotics}.

\begin{definition}[Canonical Simplified Model~\cite{lynch2017modern_robotics}]
\label{def:canonical_kine}
The state of agent $i$ is $\mathbf{q}_i = [\phi_i,\, x_i,\, y_i]^{\top}$, where $(x_i, y_i) \in \mathbb{R}^2$ is the Cartesian position and $\phi_i \in (-\pi, \pi]$ is the heading angle, with velocity-level control input $\mathbf{u}_i = [v_i,\, \omega_i]^{\top}$ ($v_i$: forward speed, $\omega_i$: angular velocity). The kinematics are governed by
\begin{equation}
    \dot{\mathbf{q}}_i
    = \begin{bmatrix} \dot{\phi}_i \\ \dot{x}_i \\ \dot{y}_i \end{bmatrix}
    = \begin{bmatrix} 0 & 1 \\ \cos\phi_i & 0 \\ \sin\phi_i & 0 \end{bmatrix}
      \begin{bmatrix} v_i \\ \omega_i \end{bmatrix},
    \label{eq:canonical_kine}
\end{equation}
which encodes the nonholonomic constraint 
$\dot{y}_i\cos\phi_i - \dot{x}_i\sin\phi_i = 0$, (i.e., the projection of $[\dot{x}_i,\, \dot{y}_i]^{\top}$ onto the lateral direction $[-\sin\phi_i,\, \cos\phi_i]^{\top}$ orthogonal to the heading is zero), 
stating that the agent can only move instantaneously along its heading direction.
\end{definition}

The distinct kinematic characteristics of each agent type are captured through their feasible control sets $\mathcal{U}_i$, which are parameterized by the maximum forward speed $v_{\max}$, maximum angular velocity $\omega_{\max}$, maximum steering angle $\psi_{\max}$, and wheelbase $\ell$, with the resulting geometry in the $(v, \omega)$ space illustrated in Fig.~\ref{fig:canonical_viz}. Thus, while all agents share the same kinematic structure Eq.~\eqref{eq:canonical_kine}, different kinematic classes are distinguished by their respective $\mathcal{U}_i$.

While the canonical representation unifies all nonholonomic agent types under a common kinematic structure, it does not immediately admit CBF-based safety enforcement in Eq.~(\ref{eq:foward_control_set})
as the nonholonomic constraint prevents independent control of the Cartesian position $(x_i, y_i)$. To address this, we employ a \emph{reference point} transformation~\cite{lynch2017modern_robotics}, whose dynamics are affine in $\mathbf{u}_i$, enabling direct application of CBF methods.

\begin{definition}[Reference-Point Operational State]
\label{def:refpoint}
For agent $i$, we define a reference point at a look-ahead distance $x_r > 0$ along the heading direction, with Cartesian position \vspace{-0.5em}
\begin{equation}
    \mathbf{p}_i(\mathbf{q}_i)
    = \begin{bmatrix} x_i + x_r\cos\phi_i \\ y_i + x_r\sin\phi_i \end{bmatrix}.
    \label{eq:refpoint}
\end{equation}
\end{definition}

\begin{lemma}[Reference-Point Velocity Mapping]
\label{lem:feedback_lin}
The time derivative of $\mathbf{p}_i$ satisfies $\dot{\mathbf{p}}_i = J(\mathbf{q}_i)\,\mathbf{u}_i$, where the canonical Jacobian $J(\mathbf{q}_i) \in \mathbb{R}^{2\times 2}$ is
\begin{equation}
    J(\mathbf{q}_i)
    = \begin{bmatrix}
        \cos\phi_i & -x_r\sin\phi_i \\
        \sin\phi_i &  x_r\cos\phi_i
      \end{bmatrix}.
    \label{eq:Jacobian}
\end{equation}
Since $\det(J(\mathbf{q}_i)) = x_r > 0$ for all $\phi_i$,  $J$ is invertible and $\mathbf{u}_i \mapsto \dot{\mathbf{p}}_i$ is a bijective affine transformation.
\end{lemma}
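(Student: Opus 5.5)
The proof is a direct computation. I will differentiate the reference point \eqref{eq:refpoint} along the canonical kinematics \eqref{eq:canonical_kine} and then compute a determinant. The only care needed is in the column ordering of the input matrix.

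\textbf{Step 1: differentiate.} Since $x_r$ is a constant, the chain rule gives
\begin{equation*}
\dot{\mathbf{p}}_i = \begin{bmatrix} \dot{x}_i - x_r \sin\phi_i\,\dot{\phi}_i \\ \dot{y}_i + x_r\cos\phi_i\,\dot{\phi}_i \end{bmatrix}.
\end{equation*}

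\textbf{Step 2: substitute the kinematics.} Read \eqref{eq:canonical_kine} as $\dot{x}_i = v_i\cos\phi_i$, $\dot{y}_i = v_i\sin\phi_i$ and $\dot{\phi}_i = \omega_i$. This is the physically intended reading, consistent with Definition~\ref{def:canonical_kine}'s description of $v_i$ as forward speed and $\omega_i$ as angular velocity. The printed matrix in \eqref{eq:canonical_kine} pairs its columns with $(v_i,\omega_i)$ as written, so it has to be read with the column order matching $(\omega_i, v_i)$. I will note this as the one point to state explicitly.

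Substituting gives
\begin{equation*}
\dot{\mathbf{p}}_i = \begin{bmatrix} v_i\cos\phi_i - x_r\omega_i\sin\phi_i \\ v_i\sin\phi_i + x_r\omega_i\cos\phi_i \end{bmatrix} = J(\mathbf{q}_i)\,\mathbf{u}_i,
\end{equation*}
with $J$ exactly as in \eqref{eq:Jacobian}.

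\textbf{Step 3: invertibility.} Compute
\begin{equation*}
\det J = x_r\cos^2\phi_i + x_r\sin^2\phi_i = x_r,
\end{equation*}
which is positive for every $\phi_i$ because $x_r>0$. For completeness, the explicit inverse is
\begin{equation*}
J^{-1} = \begin{bmatrix} \cos\phi_i & \sin\phi_i \\ -\sin\phi_i/x_r & \cos\phi_i/x_r \end{bmatrix}.
\end{equation*}

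\textbf{Step 4: bijectivity.} For fixed $\mathbf{q}_i$, the map $\mathbf{u}_i \mapsto J(\mathbf{q}_i)\mathbf{u}_i$ is linear (hence affine) and invertible on $\mathbb{R}^2$, so it is a bijection.

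The main potential obstacle is the variable-ordering inconsistency in \eqref{eq:canonical_kine}. I would resolve it by invoking the stated physical meaning of $v_i$ and $\omega_i$.
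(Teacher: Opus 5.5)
Your proof is correct and is the same direct computation the paper uses: differentiate \eqref{eq:refpoint}, substitute $\dot\phi_i=\omega_i$, $\dot x_i=v_i\cos\phi_i$, $\dot y_i=v_i\sin\phi_i$, and compute $\det J = x_r$. The ``ordering inconsistency'' you flag in \eqref{eq:canonical_kine} does not exist: the first column $(0,\cos\phi_i,\sin\phi_i)^\top$ multiplies $v_i$ and the second column $(1,0,0)^\top$ multiplies $\omega_i$, which gives exactly the physical reading you use, so no reinterpretation is needed.
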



\begin{proof}
Direct differentiation of Eq.~\eqref{eq:refpoint} and substitution of Eq.~\eqref{eq:canonical_kine} yields
\small \begin{equation*}
    \dot{\mathbf{p}}_i 
    = \begin{bmatrix} \dot{x}_i - x_r\sin\phi_i\,\dot{\phi}_i \\ 
                      \dot{y}_i + x_r\cos\phi_i\,\dot{\phi}_i \end{bmatrix}
    = \begin{bmatrix} v_i\cos\phi_i - x_r\omega_i\sin\phi_i \\ 
                      v_i\sin\phi_i + x_r\omega_i\cos\phi_i \end{bmatrix}
    = J(\mathbf{q}_i)\,\mathbf{u}_i.
\end{equation*}
The determinant follows as $\det(J) = x_r(\cos^2\phi_i + 
\sin^2\phi_i) = x_r > 0$. 
\end{proof} \vspace{-0.8em}


\vspace{-0.5em}
\begin{remark}
Since desired velocity $\dot{\mathbf{p}}_i$ can be achieved via $\mathbf{u}_i = J^{-1}(\mathbf{q}_i)\dot{\mathbf{p}}_i$, this structure yields control-affine dynamics in $\mathbf{p}_i$, enabling direct application in CBF-based safety enforcement in Section~\ref{sec:method}.
\end{remark} \vspace{-1.0em}

\section{Capability-Aware Decentralized CBF for Heterogeneous Multi-Robot Systems}
\label{sec:method}
\vspace{-0.1em}
We present a decentralized CBF framework for heterogeneous multi-robot teams that provides formal pairwise collision-avoidance guarantees. Building on the canonical representation from Section~\ref{sec:prelim}, we map all agents into a unified operational space, extend the first-order dynamics to an acceleration-controlled setting via backstepping, and introduce a cooperative responsibility-allocation strategy accounting for heterogeneous motion capabilities.\vspace{-0.4em}

\subsection{System Model}
\label{subsec:system_model} 
\vspace{-0.3em}
We consider a heterogeneous team of $N$ mobile robots, indexed by 
$i \in \{1,\ldots,N\}$, operating in a shared 2D workspace. Each 
robot belongs to one of five kinematic classes: holonomic Double 
Integrator (DI), or nonholonomic Unicycle (UNI), Differential-Drive 
(DD), Car-Like (CL), and Forward-Only (FO). The full state of robot 
$i$ is $\mathbf{x}_i = [\mathbf{q}_i^{\top},\, 
\boldsymbol{\nu}_i^{\top}]^{\top} \in \mathbb{R}^5$, where 
$\mathbf{q}_i = [\phi_i, x_i, y_i]^{\top}$ is the configuration  and 
$\boldsymbol{\nu}_i = [v_i, \omega_i]^{\top}$ is the velocity state 
as defined in Eq.~(\ref{eq:canonical_kine}). 
Each robot is modeled as a second-order control-affine system $\dot{\mathbf{x}}_i = f_i(\mathbf{x}_i) + g_i(\mathbf{x}_i)\,\mathbf{u}_i,$
where the control input $\mathbf{u}_i := \dot{\boldsymbol{\nu}}_i = 
[\dot{v}_i,\, \dot{\omega}_i]^{\top} \in \mathbb{R}^2$ represents 
generalized accelerations with bounds $\mathbf{u}_i \in 
\mathcal{U}_i^{\mathrm{acc}}$. Here, $\mathcal{U}_i^{\mathrm{acc}}$ 
is the acceleration-level admissible set induced by enforcing that 
the integrated velocity $\boldsymbol{\nu}_i(t)$ remains within the 
velocity-level constraint set $\mathcal{U}_i$ of Section~\ref{subsec:canonical} at each time step. 
The objective is to design decentralized acceleration inputs $\mathbf{u}_i(t) \in \mathcal{U}_i^{\mathrm{acc}}(\mathbf{x}_i(t))$ navigating each robot toward its goal $\mathbf{g}_i$ while guaranteeing pairwise collision avoidance for all time.


\vspace{-0.2em}
\subsection{Unified Operational-Space Model}
\label{subsec:uni_op_space}
Direct CBF synthesis on $\mathbf{q}_i$ is complicated by nonholonomic 
constraints and the heterogeneous structure of $f_i, g_i$ across 
kinematic classes. We address both issues by expressing all agents in a shared operational space via the reference-point transformation (Definition~\ref{def:refpoint}, Lemma~\ref{lem:feedback_lin}), which yields a 2D output $\mathbf{p}_i$ with dynamics affine in $\mathbf{u}_i$ for all five kinematic classes, enabling a uniform CBF formulation regardless of the underlying kinematic structure.

Differentiating $\dot{\mathbf{p}}_i = J(\mathbf{q}_i)\boldsymbol{\nu}_i$ 
yields the second-order operational-space dynamics: \vspace{-0.0em}
\begin{equation}
    \ddot{\mathbf{p}}_i 
    = \underbrace{\dot{J}(\mathbf{q}_i,\boldsymbol{\nu}_i)\,\boldsymbol{\nu}_i}
      _{\boldsymbol{\eta}_i(\mathbf{x}_i)}
    + J(\mathbf{q}_i)\,\mathbf{u}_i,
    \label{eq:op_second}
\end{equation}
where the drift term $\boldsymbol{\eta}_i(\mathbf{x}_i) \in \mathbb{R}^2$ captures the centripetal acceleration induced by heading rotation, which is nonzero whenever the robot is turning ($\omega_i \neq 0$):
\begin{equation}
    \boldsymbol{\eta}_i 
    = \omega_i\begin{bmatrix}
        -\sin\phi_i & -x_r\cos\phi_i \\
         \cos\phi_i & -x_r\sin\phi_i
      \end{bmatrix}\boldsymbol{\nu}_i.
\end{equation}
This yields the unified control-affine form
\begin{equation}
    \ddot{\mathbf{p}}_i = \boldsymbol{\eta}_i(\mathbf{x}_i) 
    + G_i(\mathbf{x}_i)\,\mathbf{u}_i,
    \label{eq:unified}
\end{equation}
where $G_i := J(\mathbf{q}_i)$ with $\det(G_i) = x_r \neq 0$, with the DI corresponding to the special case $G_i = I_2$ and $\boldsymbol{\eta}_i = \mathbf{0}$.
Since $G_i$ is invertible, any constraint on $\ddot{\mathbf{p}}_i$ can be expressed as a linear inequality in $\mathbf{u}_i$. All five classes now share the same operational-space structure $(\mathbf{p}_i, \boldsymbol{\eta}_i, G_i)$, enabling a unified CBF formulation that applies across heterogeneous dynamics.

\vspace{-0.2em}
\subsection{Pairwise Safety via Backstepping CBF}
\label{subsec:backstepping}
\subsubsection{Relative Degree and Backstepping Construction}

For every pair $(i,j)$ with $i \neq j$, let $\Delta\mathbf{p}_{ij}:= \mathbf{p}_i - \mathbf{p}_j$ and $\Delta\dot{\mathbf{p}}_{ij}:= \dot{\mathbf{p}}_i - \dot{\mathbf{p}}_j$. We define the pairwise safety function $h_{ij} := \|\Delta\mathbf{p}_{ij}\|^2 - R_{ij}^2$, where $R_{ij} > 0$ is the predefined safety distance (see 
Section~\ref{subsec:physical_safety}), with safe set $\mathcal{C} := \{\mathbf{x} \mid h_{ij}(\mathbf{x}) \geq 0,\ \forall\, i \neq j\}$. The first two time derivatives are
\begin{equation}
    \dot{h}_{ij} = 2\,\Delta\mathbf{p}_{ij}^{\top}\Delta\dot{\mathbf{p}}_{ij},
    \ \
    \ddot{h}_{ij} = 2\|\Delta\dot{\mathbf{p}}_{ij}\|^2 
    + 2\,\Delta\mathbf{p}_{ij}^{\top}\Delta\ddot{\mathbf{p}}_{ij}.
    \label{eq:hij_derivatives}
\end{equation} 
Since $\mathbf{u}_i$ enters only through $\ddot{\mathbf{p}}_i$ in 
$\ddot{h}_{ij}$, $h_{ij}$ has relative degree two with respect to 
$\mathbf{u}_i$, precluding direct application of a standard first-order CBF~\cite{xiao2019hocbf}. To address this, we employ a backstepping construction~\cite{krstic2006backstepping, taylor2022safe_backstep} that augments $h_{ij}$ with a velocity-dependent term. For design parameters $\lambda_1, \lambda_2 > 0$, define $\psi_{ij} := \dot{h}_{ij} + \lambda_1\, h_{ij}$
and the augmented safe set
\begin{equation}
    \mathcal{C}_{ij}^{\mathrm{aug}} := \big\{ (\mathbf{x}_i, \mathbf{x}_j) 
    \mid h_{ij} \geq 0 \;\wedge\; \psi_{ij} \geq 0 \big\}.
\end{equation}
Intuitively, $\psi_{ij} \geq 0$ acts as a velocity-level safety margin: as the pair approaches the boundary ($h_{ij} \to 0$), the 
allowable closing rate $\dot{h}_{ij}$ must also approach zero, 
preventing agents from reaching the boundary at high relative speed. 
Applying a first-order CBF condition to $\psi_{ij}$ via $\dot{\psi}_{ij} + \lambda_2\psi_{ij} \geq 0$ and expanding $\dot{\psi}_{ij} = \ddot{h}_{ij} + \lambda_1\dot{h}_{ij}$ yields
\begin{equation}
    \ddot{h}_{ij} + (\lambda_1 + \lambda_2)\dot{h}_{ij} 
    + \lambda_1\lambda_2\,h_{ij} \geq 0.
    \label{eq:second_order_cbf}
\end{equation}

\subsubsection{Coupled Constraint and Decentralized Decomposition}
Substituting~\eqref{eq:unified} and~\eqref{eq:hij_derivatives} into~\eqref{eq:second_order_cbf}, the joint constraint coupling both agents is:
\begin{equation}
    2\,\Delta\mathbf{p}_{ij}^{\top} 
    \big(G_i\mathbf{u}_i - G_j\mathbf{u}_j\big)
    + 2\,\Delta\mathbf{p}_{ij}^{\top}(\boldsymbol{\eta}_i - \boldsymbol{\eta}_j) 
    + \Upsilon_{ij} \geq 0,
    \label{eq:global_constraint}
\end{equation}
where $\Upsilon_{ij} := 2\|\Delta\dot{\mathbf{p}}_{ij}\|^2 + 
(\lambda_1+\lambda_2)\dot{h}_{ij} + \lambda_1\lambda_2\,h_{ij}$ 
collects all state-dependent terms, with $\Upsilon_{ij} = \Upsilon_{ji}$. 
Since~\eqref{eq:global_constraint} couples $\mathbf{u}_i$ and 
$\mathbf{u}_j$, solving it jointly would require centralized 
coordination. To enable decentralized execution, we introduce a 
responsibility parameter $\alpha_{ij} \in [0, 1]$ that splits 
enforcement between agents, where agent $i$ is assigned a share 
$\alpha_{ij}$ of the total constraint demand, yielding the decoupled constraint
\vspace{-0.2em}
\begin{equation}
    2\,\Delta\mathbf{p}_{ij}^{\top} G_i\,\mathbf{u}_i
    \geq 
    -2\,\Delta\mathbf{p}_{ij}^{\top}\boldsymbol{\eta}_i 
    - \alpha_{ij}\,\Upsilon_{ij},
    \label{eq:local_constraint}
\end{equation}
while agent $j$ simultaneously enforces the complementary share 
$(1-\alpha_{ij})$. Together, the two local constraints recover the joint constraint~\eqref{eq:global_constraint} by construction. The capability-aware choice of $\alpha_{ij}$ is 
detailed in Section~\ref{subsec:allocation}.

\subsubsection{Decentralized Safe Control for Individual Agent}
At each control step, agent $i$ solves the following QP: \vspace{-0.2em}
\begin{equation}
\begin{aligned}
    \mathbf{u}_i^* = \arg\min_{\mathbf{u}_i \in \mathbb{R}^2} \quad 
    & \|\mathbf{u}_i - \mathbf{u}_i^{\mathrm{nom}}\|^2 \\
    \mathrm{s.t.} \quad 
    & \mathbf{a}_{ij}^{\top}\,\mathbf{u}_i \geq b_{ij}, 
    \quad \forall\, j \neq i, \\
    & \mathbf{u}_i \in \mathcal{U}_i^{\mathrm{acc}}(\mathbf{x}_i),
\end{aligned}
\label{eq:QP}
\end{equation}
where $\mathbf{u}_i^{\mathrm{nom}}$ is the nominal control input, $\mathbf{a}_{ij}^{\top} := 2\,\Delta\mathbf{p}_{ij}^{\top} 
G_i(\mathbf{x}_i) \in \mathbb{R}^{1\times 2}$, and $b_{ij} := 
-2\,\Delta\mathbf{p}_{ij}^{\top}\boldsymbol{\eta}_i(\mathbf{x}_i) - \alpha_{ij}\,\Upsilon_{ij} \in \mathbb{R}$. The kinematic constraint $\mathbf{u}_i \in \mathcal{U}_i^{\mathrm{acc}}(\mathbf{x}_i)$ encodes the model-specific input bounds. The QP~\eqref{eq:QP} has linear constraints with respect to $\mathbf{u}_i$ and can be solved efficiently using standard QP solvers (e.g., OSQP~\cite{stellato2020osqp}). When infeasibility arises, existing strategies discussed in Section~\ref{sec:relWork} can be adopted if needed. We assume all agents simultaneously enforce~\eqref{eq:local_constraint} in their respective QPs.%
\phantomsection\label{ass:mutual}

\begin{definition}[Capability-Aware Heterogeneous CBF (CA-HCBF)]
\label{def:cahcbf}
For a heterogeneous multi-robot system, the CA-HCBF for a pair $(i,j)$ is the augmented barrier function $\psi_{ij} := \dot{h}_{ij} + \lambda_1 h_{ij}$, defined over the unified operational space (Section~\ref{subsec:uni_op_space}), with the decentralized constraint~\eqref{eq:local_constraint} enforced by each agent using complementary responsibility parameters $\alpha_{ij}$ and $\alpha_{ji} = 1 - \alpha_{ij}$ detailed in Section~\ref{subsec:allocation}
\end{definition}



\vspace{-0.8em}
\subsection{Safety Guarantee}
\label{subsec:safety_guarantee}



\begin{theorem}[Heterogeneous Pairwise Safety]
\label{thm:safety}
Suppose all agents simultaneously enforce~\eqref{eq:local_constraint}, each QP~\eqref{eq:QP} is feasible for all $t \geq 0$, and $(\mathbf{x}_i(0), \mathbf{x}_j(0)) \in \mathcal{C}_{ij}^{\mathrm{aug}}$ for all $i \neq j$. Then the augmented safe set $\mathcal{C}_{ij}^{\mathrm{aug}}$ is forward invariant for any pair $(i,j)$ with heterogeneous dynamics satisfying~\eqref{eq:unified}. 
In particular, $h_{ij}(t) \geq 0$ and $\psi_{ij}(t) \geq 0$ for all $t \geq 0$, implying $\|\mathbf{p}_i(t) - \mathbf{p}_j(t)\| \geq R_{ij}$.
\end{theorem}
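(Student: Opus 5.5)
The proof proceeds in three steps. First we recover the joint constraint from the two local ones. Then we apply a comparison argument to $\psi_{ij}$. Finally we pass the result back to $h_{ij}$.

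\emph{Step 1 (recovering the joint constraint).} Fix a pair $(i,j)$. Agent $i$ enforces \eqref{eq:local_constraint} with share $\alpha_{ij}$. Agent $j$ enforces the same constraint written from its own side, with share $\alpha_{ji}=1-\alpha_{ij}$. Because $\Delta\mathbf{p}_{ji}=-\Delta\mathbf{p}_{ij}$, agent $j$'s constraint reads $-2\Delta\mathbf{p}_{ij}^{\top}G_j\mathbf{u}_j \ge 2\Delta\mathbf{p}_{ij}^{\top}\boldsymbol{\eta}_j-(1-\alpha_{ij})\Upsilon_{ji}$. We add this to agent $i$'s constraint and use $\Upsilon_{ij}=\Upsilon_{ji}$. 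This recovers \eqref{eq:global_constraint} exactly.

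Next we substitute the unified dynamics \eqref{eq:unified} for both agents into $\ddot{h}_{ij}$ in \eqref{eq:hij_derivatives}. The structure $(\boldsymbol{\eta},G)$ is common to all five classes, including DI with $G=I_2$ and $\boldsymbol{\eta}=0$, so the argument does not depend on the pair's kinematic types. This shows that \eqref{eq:global_constraint} is identical to \eqref{eq:second_order_cbf}, i.e.\ $\dot\psi_{ij}+\lambda_2\psi_{ij}\ge 0$ along closed-loop trajectories. QP feasibility for all $t\ge0$ ensures that both local constraints actually hold at every time.

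\emph{Step 2 (comparison argument for $\psi_{ij}$).} The differential inequality $\dot\psi_{ij}\ge-\lambda_2\psi_{ij}$ gives $\frac{d}{dt}\big(e^{\lambda_2 t}\psi_{ij}\big)\ge0$. Hence $\psi_{ij}(t)\ge e^{-\lambda_2 t}\psi_{ij}(0)\ge0$, using the initial condition $(\mathbf{x}_i(0),\mathbf{x}_j(0))\in\mathcal{C}_{ij}^{\mathrm{aug}}$. Alternatively, one could invoke the standard CBF invariance result cited after Definition~\ref{def:cbf}, treating $\psi_{ij}$ as a first-order CBF in the joint input $(\mathbf{u}_i,\mathbf{u}_j)$. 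Either route needs existence and uniqueness of solutions, so I would state the standing assumption that the QP solutions are locally Lipschitz (or at least measurable) in the state. That justifies the comparison lemma for absolutely continuous trajectories.

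\emph{Step 3 (passing back to $h_{ij}$).} Since $\psi_{ij}=\dot h_{ij}+\lambda_1 h_{ij}\ge0$, the same comparison argument gives $h_{ij}(t)\ge e^{-\lambda_1 t}h_{ij}(0)\ge0$. Both conditions defining $\mathcal{C}_{ij}^{\mathrm{aug}}$ therefore persist, which is forward invariance. Finally, $h_{ij}\ge0$ is equivalent to $\|\mathbf{p}_i-\mathbf{p}_j\|\ge R_{ij}$ because $R_{ij}>0$.

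\emph{Main obstacle.} Steps 2 and 3 are routine. The subtle point is Step 1, which needs the two agents' constraints to be evaluated on consistent data at the same instant. Concretely, both agents must use the same $\Upsilon_{ij}$ and complementary $\alpha_{ij}$ and $\alpha_{ji}$ at every instant, and this is exactly what the simultaneity assumption provides. I would make that dependence explicit. I would also note the regularity of $t\mapsto\psi_{ij}(t)$ needed for the comparison lemma: $\psi_{ij}$ is $C^1$ along trajectories because $\mathbf{u}$ enters only through $\ddot{\mathbf p}$.
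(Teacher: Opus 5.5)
Your proposal is correct and follows essentially the same argument as the paper. Adding the two local constraints with complementary shares $\alpha_{ij}$ and $1-\alpha_{ij}$ (using $\Upsilon_{ij}=\Upsilon_{ji}$) recovers \eqref{eq:global_constraint}, i.e.\ $\dot\psi_{ij}+\lambda_2\psi_{ij}\ge 0$, which gives $\psi_{ij}\ge 0$ and then $h_{ij}(t)\ge h_{ij}(0)e^{-\lambda_1 t}\ge 0$; your explicit comparison step for $\psi_{ij}$ just unpacks the paper's appeal to the CBF theorem, and your remarks on the complementarity $\alpha_{ji}=1-\alpha_{ij}$ and on solution regularity make explicit assumptions the paper leaves implicit (absolute continuity of $\psi_{ij}$ suffices, whereas $C^1$ would need continuous $\mathbf{u}$).
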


\begin{proof}
Since all agents enforce~\eqref{eq:local_constraint}, summing the individual constraints of agents $i$ and $j$ with weights $\alpha_{ij}$ and $1-\alpha_{ij}$ recovers~\eqref{eq:global_constraint}, equivalent to $\dot{\psi}_{ij} + \lambda_2\psi_{ij} \geq 0$. By the CBF theorem~\cite{ames2019control}, $\psi_{ij}(t) \geq 0$ for all $t \geq 0$ given $(\mathbf{x}_i(0), \mathbf{x}_j(0)) \in \mathcal{C}_{ij}^{\mathrm{aug}}$. Since $\psi_{ij} \geq 0$ implies $h_{ij}(t) \geq h_{ij}(0)e^{-\lambda_1 t} \geq 0$, forward invariance of $\mathcal{C}_{ij}^{\mathrm{aug}}$ follows. 
\end{proof}

\subsection{From Reference-Point Safety to Physical Safety}
\label{subsec:physical_safety}
Since the CBF constraint is formulated at the reference-point level, we show that an appropriate choice of $R_{ij}$ suffices to guarantee 
physical collision avoidance.
Let $r_i^{\mathrm{phys}}$ denote the physical bounding radius of robot 
$i$, and define the per-agent CBF radius 
$r_i^{\mathrm{cbf}} := x_{r,i} + r_i^{\mathrm{phys}}$ and the pairwise 
safety distance $R_{ij} := r_i^{\mathrm{cbf}} + r_j^{\mathrm{cbf}}$.

\begin{proposition}[Physical Safety]
\label{prop:physical_safety}
If $\|\mathbf{p}_i(t) - \mathbf{p}_j(t)\| \geq R_{ij}$ for all 
$t \geq 0$, then $\|\mathbf{c}_i(t) - \mathbf{c}_j(t)\| \geq 
r_i^{\mathrm{phys}} + r_j^{\mathrm{phys}}$ for all $t \geq 0$, 
where $\mathbf{c}_i$ denotes the geometric center of robot $i$.
\end{proposition}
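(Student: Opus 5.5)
The plan is a direct triangle-inequality argument at each fixed time $t$. First we relate each reference point to the corresponding geometric center. By Definition~\ref{def:refpoint}, $\mathbf{p}_i = \mathbf{c}_i + x_{r,i}[\cos\phi_i,\ \sin\phi_i]^{\top}$, where $\mathbf{c}_i = [x_i, y_i]^{\top}$ is taken to be the geometric center. Since the heading vector has unit norm, this gives $\|\mathbf{p}_i - \mathbf{c}_i\| = x_{r,i}$ for every $t$ and every heading. For the DI agent there is no heading offset, so we set $x_{r,i} = 0$ (or keep any $x_{r,i}\ge 0$); the bound $\|\mathbf{p}_i - \mathbf{c}_i\| \le x_{r,i}$ then still holds, and that inequality is all the argument needs.

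Next we write
\[
\mathbf{c}_i - \mathbf{c}_j = (\mathbf{p}_i - \mathbf{p}_j) - (\mathbf{p}_i - \mathbf{c}_i) + (\mathbf{p}_j - \mathbf{c}_j)
\]
and apply the reverse triangle inequality:
\[
\|\mathbf{c}_i - \mathbf{c}_j\| \ge \|\mathbf{p}_i - \mathbf{p}_j\| - \|\mathbf{p}_i - \mathbf{c}_i\| - \|\mathbf{p}_j - \mathbf{c}_j\| \ge R_{ij} - x_{r,i} - x_{r,j}.
\]
Substituting $R_{ij} = x_{r,i} + r_i^{\mathrm{phys}} + x_{r,j} + r_j^{\mathrm{phys}}$, the look-ahead offsets cancel, leaving $\|\mathbf{c}_i - \mathbf{c}_j\| \ge r_i^{\mathrm{phys}} + r_j^{\mathrm{phys}}$. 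Because $t\ge 0$ was arbitrary, the claim holds for all $t \geq 0$. If desired, the hypothesis can be supplied by Theorem~\ref{thm:safety}.

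There is no real obstacle here. The one point needing care is the identification of $\mathbf{c}_i$ with the canonical position $(x_i,y_i)$ of Definition~\ref{def:canonical_kine}. If for some platform the center is instead a fixed offset from $(x_i,y_i)$, the argument should be rerun with $\|\mathbf{p}_i - \mathbf{c}_i\|$ bounded by the relevant offset, and that bound absorbed into $r_i^{\mathrm{cbf}}$. We would also remark that the resulting non-overlap of the physical disks of radii $r_i^{\mathrm{phys}}$ about $\mathbf{c}_i$ is exactly physical collision avoidance.
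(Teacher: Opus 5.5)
Your proposal is correct and follows the paper's proof: both combine $\|\mathbf{p}_i - \mathbf{c}_i\| = x_{r,i}$ with the reverse triangle inequality, then cancel the look-ahead offsets using $R_{ij} = r_i^{\mathrm{cbf}} + r_j^{\mathrm{cbf}}$. Your side remarks make explicit two assumptions the paper leaves implicit: that only $\|\mathbf{p}_i - \mathbf{c}_i\| \le x_{r,i}$ is needed (which covers the DI case), and that $\mathbf{c}_i$ is identified with $(x_i, y_i)$.
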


\begin{proof}
Since $\|\mathbf{p}_i - \mathbf{c}_i\| = x_{r,i}$, the triangle 
inequality gives $\|\mathbf{c}_i - \mathbf{c}_j\| \geq 
\|\mathbf{p}_i - \mathbf{p}_j\| - x_{r,i} - x_{r,j} \geq 
R_{ij} - x_{r,i} - x_{r,j} = r_i^{\mathrm{phys}} + 
r_j^{\mathrm{phys}}$. 
\end{proof}

\begin{remark}
Together with Theorem~\ref{thm:safety}, 
Proposition~\ref{prop:physical_safety} establishes end-to-end physical 
collision avoidance: the CBF radius $r_i^{\mathrm{cbf}}$ compensates 
for the reference-point offset $x_{r,i}$ while covering the physical 
extent $r_i^{\mathrm{phys}}$ of the robot.
\end{remark}
\vspace{-0.7em}

\subsection{Capability-Aware Responsibility Allocation}
\label{subsec:allocation}


The joint constraint~\eqref{eq:global_constraint} is decomposed into individual constraints via $\alpha_{ij} \in [0,1]$. Rather than fixing $\alpha_{ij} = \tfrac{1}{2}$ (assuming all agents are homogeneous), we propose a capability-aware allocation assigning greater responsibility to the agent with more motion freedom, while ensuring neither agent is assigned a burden it cannot physically satisfy.

\subsubsection{Control Capability via Support Function}

To quantify each agent's capability along an arbitrary direction, 
we employ the \emph{support function} of the admissible control set $\mathcal{U}_i^{\mathrm{acc}}$:
\begin{equation}
    \mathcal{S}_{\mathcal{U}}(\mathbf{d}) 
    := \max_{\mathbf{u} \in \mathcal{U}}\, \mathbf{d}^{\top}\mathbf{u},
\end{equation}
which measures the maximum projection of $\mathbf{u}$ along the 
direction $\mathbf{d}$, i.e., the furthest extent of $\mathcal{U}$ along $\mathbf{d}$. This is used to evaluate both the separating capability along the avoidance direction and the progress capability toward the goal, forming the basis of the allocation strategy described below.

The maximum separating acceleration agent $i$ can produce along 
the separation direction $\Delta\mathbf{p}_{ij} = \mathbf{p}_i - 
\mathbf{p}_j$ is obtained by evaluating the support function of 
$G_i\mathcal{U}_i^{\mathrm{acc}}$, the admissible control set 
transformed into operational space by $G_i$:
\begin{equation}
    \rho_{ij}^{i} := \mathcal{S}_{G_i\mathcal{U}_i^{\mathrm{acc}}}
    \!\left(2\Delta\mathbf{p}_{ij}\right) 
    = \max_{k}\, \mathbf{A}_{ij}\mathbf{u}_i^{(k)},
    \label{eq:support_cap}
\end{equation}
where $\mathbf{A}_{ij} := 2\Delta\mathbf{p}_{ij}^{\top}G_i \in 
\mathbb{R}^{1\times 2}$ is the CBF constraint row vector from~\eqref{eq:local_constraint}, and $\{\mathbf{u}_i^{(k)}\}$ are the vertices of $\mathcal{U}_i^{\mathrm{acc}}$. Incorporating the heading-rotation drift, the effective net capability is \vspace{-0.6em}
\begin{equation}
    \bar{\rho}_{ij}^{i} := \max\!\left(\rho_{ij}^{i} 
    + 2\,\Delta\mathbf{p}_{ij}^{\top}\boldsymbol{\eta}_i,\ 0\right),
    \label{eq:eff_cap}
\end{equation}
and symmetrically for agent $j$. Since $\boldsymbol{\eta}_i$ 
contributes to the CBF constraint independently of $\mathbf{u}_i$, $\bar{\rho}_{ij}^{i}$ captures the total effective separating capability of agent $i$, combining both its control authority and the drift already acting in the avoidance direction.

\subsubsection{Intent-Oriented Alpha}

We assign more responsibility to the agent that can better absorb avoidance effort without deviating from its nominal trajectory. Let $\mathbf{d}_i := \dot{\mathbf{p}}_i^{\mathrm{nom}} - \dot{\mathbf{p}}_i$ denote the velocity gap between the nominal and actual operational-space velocity. The progress capability $\sigma_i := \max_{\mathbf{u}_i \in \mathcal{U}_i^{\mathrm{acc}}} \mathbf{d}_i^{\top} G_i\,\mathbf{u}_i$ measures how much agent $i$ can accelerate toward its nominal motion direction, giving the progress-oriented responsibility parameter
\begin{equation}
    \alpha_{ij}^{\mathrm{prog}} := \frac{\sigma_i}{\sigma_i + \sigma_j 
    + \epsilon},
    \label{eq:alpha_prog}
\end{equation}
where $\epsilon > 0$ prevents division by zero. An agent with high $\sigma_i$ can take on more avoidance responsibility without being significantly deflected from its nominal trajectory.

\subsubsection{Feasibility-Aware Clipping}

While $\alpha_{ij}^{\mathrm{prog}}$ is desirable from a goal-reaching standpoint, it may render the individual QP infeasible if the assigned burden exceeds the agent's kinematic capability. 
Agent $i$'s QP is feasible iff $\bar{\rho}_{ij}^{i} \geq -\alpha_{ij}\Upsilon_{ij}$, and symmetrically for agent $j$. This condition becomes active when $\Upsilon_{ij} < 0$, i.e., when the agents are approaching each other and the CBF constraint demands a separating response. Dividing by $-\Upsilon_{ij} > 0$ yields the feasible interval for~$\alpha_{ij}$: \vspace{-0.5em}
\begin{equation}
    \alpha_{\min} := \max\!\left(0,\ 1 - 
    \frac{\bar{\rho}_{ij}^{j}}{-\Upsilon_{ij}}\right), \;
    \alpha_{\max} := \min\!\left(1,\ 
    \frac{\bar{\rho}_{ij}^{i}}{-\Upsilon_{ij}}\right),
    \label{eq:alpha_interval}
\end{equation}

where $\alpha_{\max}$ is the maximum share agent $i$ can accommodate and $\alpha_{\min}$ is the minimum to keep agent $j$'s QP feasible. The final parameter is
\vspace{-0.2em}
\begin{equation}
    \alpha_{ij} = 
    \begin{cases}
        \mathrm{clip}\!\left(\alpha_{ij}^{\mathrm{prog}},\ 
        \alpha_{\min},\ \alpha_{\max}\right) 
        & \text{if } \alpha_{\min} \leq \alpha_{\max}, \\
        \tfrac{1}{2}
        & \text{otherwise},
    \end{cases}
    \label{eq:alpha_final}
\end{equation}
for $\Upsilon_{ij} < 0$, and $\alpha_{ij} = \alpha_{ij}^{\mathrm{prog}}$ when $\Upsilon_{ij} \geq 0$. When $\alpha_{\min} > \alpha_{\max}$, the combined capability $\bar{\rho}_{ij}^{i} + \bar{\rho}_{ij}^{j}$ is insufficient to cover the total demand $-\Upsilon_{ij}$, and no decentralized split can render both individual QPs feasible simultaneously. In this work, we fall back to equal splitting with $\alpha_{ij} = \tfrac{1}{2}$, preserving the complementarity $\alpha_{ij} + \alpha_{ji} = 1$, even though the assigned share may exceed each agent's individually feasible range ($\alpha_{ij} > \alpha_{\max}$). If the resulting QP remains infeasible after all allocation adjustments, full braking is applied as a last-resort fallback to maintain safety.

\begin{remark}
The feasible interval~\eqref{eq:alpha_interval} is well-defined 
whenever $\bar{\rho}_{ij}^{i} + \bar{\rho}_{ij}^{j} \geq -\Upsilon_{ij}$, which corresponds to the feasibility condition in Theorem~\ref{thm:safety}. When violated, no decentralized allocation can guarantee feasibility, though this does not necessarily imply global infeasibility, as agent $j$ contributes in the opposite direction in~\eqref{eq:global_constraint} and a centralized formulation may still admit a feasible solution.
\end{remark}
\vspace{-1em}


\section{Experiments}
\label{sec:experiments}
\vspace{-0.3em}

\begin{table*}[!t]
\centering
\caption{Performance comparison across team sizes ($N \in \{10, 20, 30\}$, 50 trials each).}
\label{tab:main_results}
\setlength{\tabcolsep}{4pt}
\renewcommand{\arraystretch}{1.1}
\begin{tabular}{l ccc ccc ccc}
\toprule
& \multicolumn{3}{c}{$N=10$}
& \multicolumn{3}{c}{$N=20$}
& \multicolumn{3}{c}{$N=30$} \\
\cmidrule(lr){2-4}\cmidrule(lr){5-7}\cmidrule(lr){8-10}
\textbf{Method}
& \texttt{AR}\,(\%)\,$\uparrow$ & \texttt{\# Viol}\,$\downarrow$ & \texttt{Mean\,V}\,$\downarrow$
& \texttt{AR}\,(\%)\,$\uparrow$ & \texttt{\# Viol}\,$\downarrow$ & \texttt{Mean\,V}\,$\downarrow$
& \texttt{AR}\,(\%)\,$\uparrow$ & \texttt{\# Viol}\,$\downarrow$ & \texttt{Mean\,V}\,$\downarrow$ \\
\midrule
\text{CA-HCBF (Ours)}
  & \cellcolor{gray!25}\textbf{97.4} & \cellcolor{gray!10}{0.1 (0.5)}  & \cellcolor{gray!10}{0.001 (0.008)}
  & \cellcolor{gray!25}\textbf{94.7} & \cellcolor{gray!10}{1.2 (2.0)}  & \cellcolor{gray!25}\textbf{0.003 (0.008)}
  & \cellcolor{gray!25}\textbf{89.6} & {5.1 (9.6)}  & \cellcolor{gray!25}\textbf{0.018 (0.035)} \\
\text{APF + Tracking} ($w{=}0.9$)
  & \cellcolor{gray!10}{94.2} & 4.3 (2.8)   & 0.273 (0.112)
  & \cellcolor{gray!10}{90.9} & 22.7 (6.4)  & 0.280 (0.075)
  & \cellcolor{gray!10}{87.1} & 59.7 (11.6) & 0.279 (0.066) \\
\text{APF + Tracking($w{=}0.5$)}
  & 89.4 & 2.1 (2.0)  & 0.200 (0.147)
  & 86.1 & 13.5 (3.8) & 0.267 (0.064)
  & 80.7 & 37.9 (9.4) & 0.271 (0.048) \\
\text{APF + Tracking($w{=}0.1$)}
  & 46.2 & \cellcolor{gray!25}\textbf{0.0 (0.1)} & 0.005 (0.034)
  & 22.5 & \cellcolor{gray!25}\textbf{0.1 (0.4)} & 0.041 (0.124)
  & 15.5 & \cellcolor{gray!25}\textbf{1.0 (1.3)} & 0.179 (0.206) \\
\text{APF + HOCBF}
  & 42.4 & 2.5 (1.6)   & 0.196 (0.124)
  & 36.6 & 11.5 (7.3)  & 0.271 (0.086)
  & 29.9 & 25.2 (9.9) & 0.264 (0.065) \\
\text{APF + sRCBF}
  & 33.4 & \cellcolor{gray!25}\textbf{0.0 (0.1)} & \cellcolor{gray!25}\textbf{0.000 (0.000)}
  &  3.9 & 1.4 (3.1) & 0.056 (0.097)
  &  1.0 & \cellcolor{gray!10}{1.4 (2.8)} & 0.046 (0.078) \\
\text{PD + sRCBF}
  & 24.6 & 1.1 (2.2)  & 0.028 (0.100)
  &  4.4 & 5.7 (10.5) & \cellcolor{gray!10}{0.029 (0.071)}
  &  0.9 & 3.2 (3.8)  & \cellcolor{gray!10}{0.040 (0.068)} \\
\bottomrule
\end{tabular}
\begin{minipage}{\textwidth}
\small
\vspace{2pt}
Results are reported as mean (std). \textbf{Bold} with \colorbox{gray!25}{dark shading} indicates best performance per metric; \colorbox{gray!10}{light shading} indicates second best. \texttt{AR}\,(\%): goal arrival rate; \texttt{\# Viol}: number of 
pairwise safety violations; \texttt{Mean\,V}: mean violation depth (m). 
\vspace{-1.5em}
\end{minipage}
\end{table*}


Although Theorem~\ref{thm:safety} provides a theoretical pairwise safety guarantee, this does not directly translate to perfect safety in practice. In dense environments, constraints from multiple neighbors stack simultaneously~\cite{wang2016safety}, making QP feasibility increasingly difficult to maintain in decentralized settings.

To analyze the practical behavior of our method and evaluate the effect of capability-aware allocation and feasibility-aware clipping within this gap between theory and practice, we validate through (i) random start-goal navigation simulations with up to $N=30$ heterogeneous robots, (ii) an ablation study on the capability-aware allocation strategy, and (iii) a five-robot physical experiment involving all five kinematic classes.

\subsection{Experimental Setup}
\label{subsec:setup}
\begin{figure}[t]
    \captionsetup{skip=0pt}
    \centering
    \includegraphics[width=0.35\textwidth]{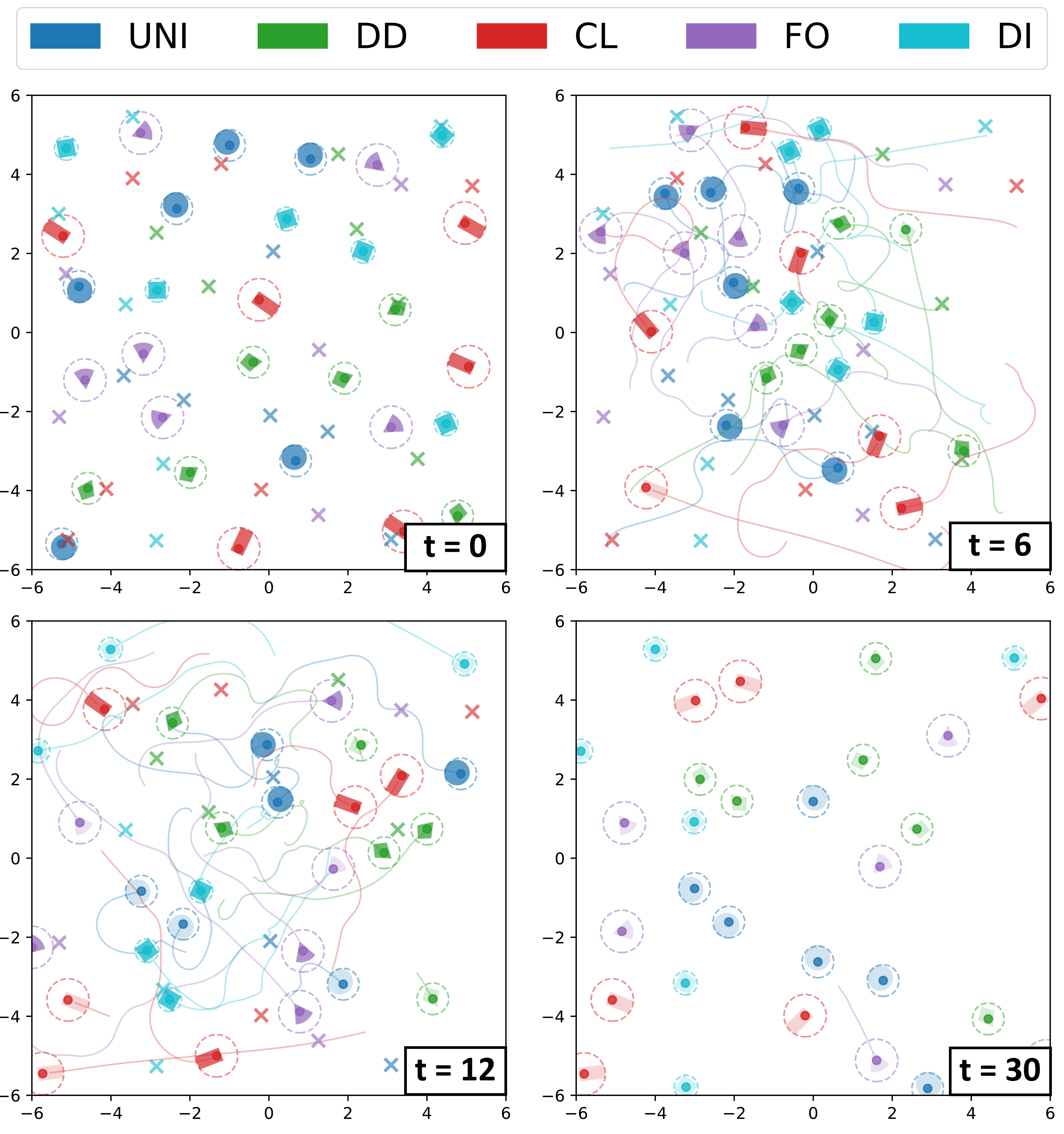}
    \caption{Random start--goal scenario with 30 robots (5 per model) at multiple timestamps. At $t=0$, robots start from random positions; \texttimes~marks indicate assigned goals. UNI (blue circle), DI (light-blue circle), DD (green diamond), CL (red rectangle), FO (purple sector). The dashed circle denotes $r_i^{\mathrm{cbf}}$ (Section~\ref{subsec:physical_safety}), with its center representing the robot's reference point (operational state position)}
    \label{fig:random_start_goal}
    \vspace{-1.5em}
\end{figure}

\subsubsection*{\textbf{Simulation Environment}}
We evaluate on an unbounded 2D plane, with starts and goals sampled within an $11 \times 11\,\mathrm{m}^2$ region, using $N \in \{10, 20, 30\}$ robots, each consisting of an equal number of agents from all five kinematic classes (DI, UNI, 
DD, CL, FO). Each robot is assigned a random start-goal pair with a minimum start-to-goal distance of $4.0\,\mathrm{m}$ and minimum inter-agent initial separation of $1.5\,\mathrm{m}$ (see Fig.~\ref{fig:random_start_goal}), repeated over 50 independent trials. Agent parameters are summarized in Table~\ref{tab:agent_params}. The simulation timestep is $\Delta t = 0.05\,\mathrm{s}$, with a maximum of 1000 steps per trial. A robot is considered to have reached its goal within $0.3\,\mathrm{m}$. CBF gains are set to $\lambda_1 = \lambda_2 = 2.0$, and the QP is solved using OSQP~\cite{stellato2020osqp} with tolerances of $10^{-5}$. 
For CL and FO agents, a steering acceleration floor of $0.1$ is added to prevent excessive QP infeasibility at zero forward velocity, where admissible lateral acceleration collapses to zero. Note that the velocity-level constraint still prohibits actual lateral motion.
\vspace{-0.4em}

\begin{table}[h]
\centering
\caption{Per-agent kinematic and physical parameters.}
\label{tab:agent_params}
\begin{tabular}{lccccc}
\toprule
Parameter & DI & UNI & DD & CL & FO \\
\midrule
Look-ahead $x_r^i$ (m)     & --   & 0.1  & 0.1  & 0.2  & 0.2  \\
$v_{\max}$ (m/s)            & 1.0  & 1.0  & 1.0  & 1.0  & 1.0  \\
$a_{\max}$ (m/s$^2$)        & 2.0  & 2.0  & --   & 2.0  & 2.0  \\
$\dot{\omega}_{\max}$ (rad/s$^2$) & -- & 8.0 & -- & 10.0 & 10.0 \\
Wheelbase $\ell$ (m)        & --   & --   & 0.5  & 0.5  & 0.5  \\
Physical radius (m)         & 0.3  & 0.3  & 0.3  & --   & --   \\
Body size (W$\times$H) (m)               & --   & --   & --   & $0.3{\times}0.6$ & $0.3{\times}0.6$ \\
\bottomrule
\end{tabular} \vspace{-1.2em}
\end{table}  

\subsubsection*{\textbf{Nominal Controller}}
All agents share a decentralized APF~\cite{khatib1986apf} as the 
nominal controller, which computes a desired operational-space velocity $\dot{\mathbf{p}}_i^{\mathrm{nom}} = w\,\mathbf{f}_i^{\mathrm{att}} + 
(1-w)\,\mathbf{f}_i^{\mathrm{rep}}$, where $\mathbf{f}_i^{\mathrm{att}}$ 
is the attractive force vector directing agent $i$ toward its goal, $\mathbf{f}_i^{\mathrm{rep}}$ is the repulsive force vector that drives the agent away from neighboring robots with magnitude decaying as $1/d_{ij}^3$, where $d_{ij}$ is the distance to neighbor $j$, and $w \in [0,1]$ controls the trade-off between goal-seeking and collision 
avoidance\footnote{All methods using APF as the nominal controller adopt $w = 0.9$, unless otherwise noted} The velocity $\dot{\mathbf{p}}_i^{\mathrm{nom}}$ is converted to a nominal acceleration $\mathbf{u}_i^{\mathrm{nom}}$ using a proportional controller. For DI agents, $\mathbf{u}_i^{\mathrm{nom}}$ is applied directly, while nonholonomic agents track it via speed and heading controllers.

\subsubsection*{\textbf{Evaluation Metrics}}
To evaluate safety and efficiency, we measure three metrics.
\texttt{Arrival rate} (\texttt{AR})\,(\%) is the fraction of robots 
reaching their goal within $T_{\max}=1000$ steps.
\texttt{Violation count} (\texttt{\# Viol}) is the number of distinct pairwise collision events where $\|\mathbf{p}_i - \mathbf{p}_j\| < R_{ij}$. \texttt{Mean violation depth} (\texttt{Mean\,V}) is the average penetration depth $\max(0,\,R_{ij} - \|\mathbf{p}_i - \mathbf{p}_j\|)$ over all violating pairs and timesteps, measuring how far robots have 
exceeded the safety distance $R_{ij}$.  \vspace{-0.5em}

\subsection{{Baselines}}
\label{subsubsec:baselines}
We compare our approach against following baselines:
\textbf{APF + Tracking ($w \in \{0.1, 0.5, 0.9\}$).} 
The nominal APF controller with varying $w$ (Section~\ref{subsec:setup}), without any safety filter. This baseline evaluates how much safety can be achieved through safety-aware nominal control alone in heterogeneous multi-robot navigation, and assesses the safety-efficiency trade-off induced by varying the weighting parameter without a formal safety guarantee.

\textbf{APF + HOCBF.} A backstepping-based high-order CBF applied in the operational space, following the second-order formulation (acceleration control) but assuming all agents behave as holonomic double integrators. Specifically, the drift term $\boldsymbol{\eta}_i$ arising from heading rotation (Section~\ref{subsec:uni_op_space}) is neglected, and nonholonomic agents track the prescribed operational-space commands via speed and heading controllers. This 
baseline isolates the effect of heterogeneity-aware modeling.

\textbf{sRCBF (PD / APF).} A smooth Robust CBF (sRCBF)~\cite{kim2025srcbf} applied to the canonical form of each agent, treating others as dynamic obstacles bounded by their maximum velocities, providing a model-agnostic formal safety guarantee without requiring knowledge of other agents' dynamics. Two nominal 
controllers are evaluated: the Proportional-Derivative (PD) controller following the original formulation~\cite{kim2025srcbf}, and the APF controller for fair comparison under identical nominal planning conditions. Both variants assume worst-case neighbor behavior without cooperative coordination. \vspace{-0.4em}


\subsection{Random Start-Goal Navigation}
\label{subsec:navigation}
\vspace{-0.3em}

The main results are summarized in Table~\ref{tab:main_results}. 
Our CA-HCBF achieves the highest arrival rate at $N=10$ and $N=20$ ($97.4$\% and $94.7$\%) and remains competitive at $N=30$ ($89.6$\%), while maintaining consistently low safety violations. Even at $N=30$, the mean violation depth remains at $0.018$\,m, confirming that the backstepping CBF formulation and capability-aware allocation jointly preserve safety without sacrificing goal-reaching efficiency.

In contrast, the APF-only baselines expose a fundamental 
safety-efficiency trade-off governed by $w$: at $w{=}0.9$, 
arrival rates are high but violations are substantial (up to 
59.7 at $N{=}30$), while $w{=}0.1$ nearly eliminates violations 
at the cost of near-zero arrival rates (15.5\% at $N{=}30$), 
confirming that manual weight tuning cannot reconcile both 
objectives simultaneously.


APF + HOCBF applies the same second-order CBF structure but assumes all agents are holonomic double integrators, neglecting the drift term $\boldsymbol{\eta}_i$. Despite using an identical CBF formulation, arrival rates drop sharply to $42.4$\%, $36.6$\%, and $29.9$\% at $N=10, 20, 30$, while violations remain substantial ($2.5$, $11.5$, $25.2$). This confirms that neglecting kinematic heterogeneity leads to constraint violations and degraded cooperative navigation behavior even with a CBF safety filter applied.

The sRCBF-based baselines take the opposite extreme: despite achieving low violation counts, both exhibit near-zero arrival rates at $N=20$ and $N=30$ (below $5$\%). By treating neighbors as worst-case obstacles, sRCBF avoids the challenges of heterogeneous CBF synthesis at the cost of excessive conservatism, effectively freezing agents in place. This non-cooperative assumption becomes increasingly detrimental as $N$ grows, demonstrating that heterogeneity-awareness and cooperative 
responsibility sharing are essential for scalable navigation.
\vspace{-0.5em}

\subsection{Ablation Study}
\label{subsec:ablation}
\vspace{-0.em}

\begin{figure}[t]
    \captionsetup{skip=0pt}
    \centering
    \includegraphics[width=0.4\textwidth]{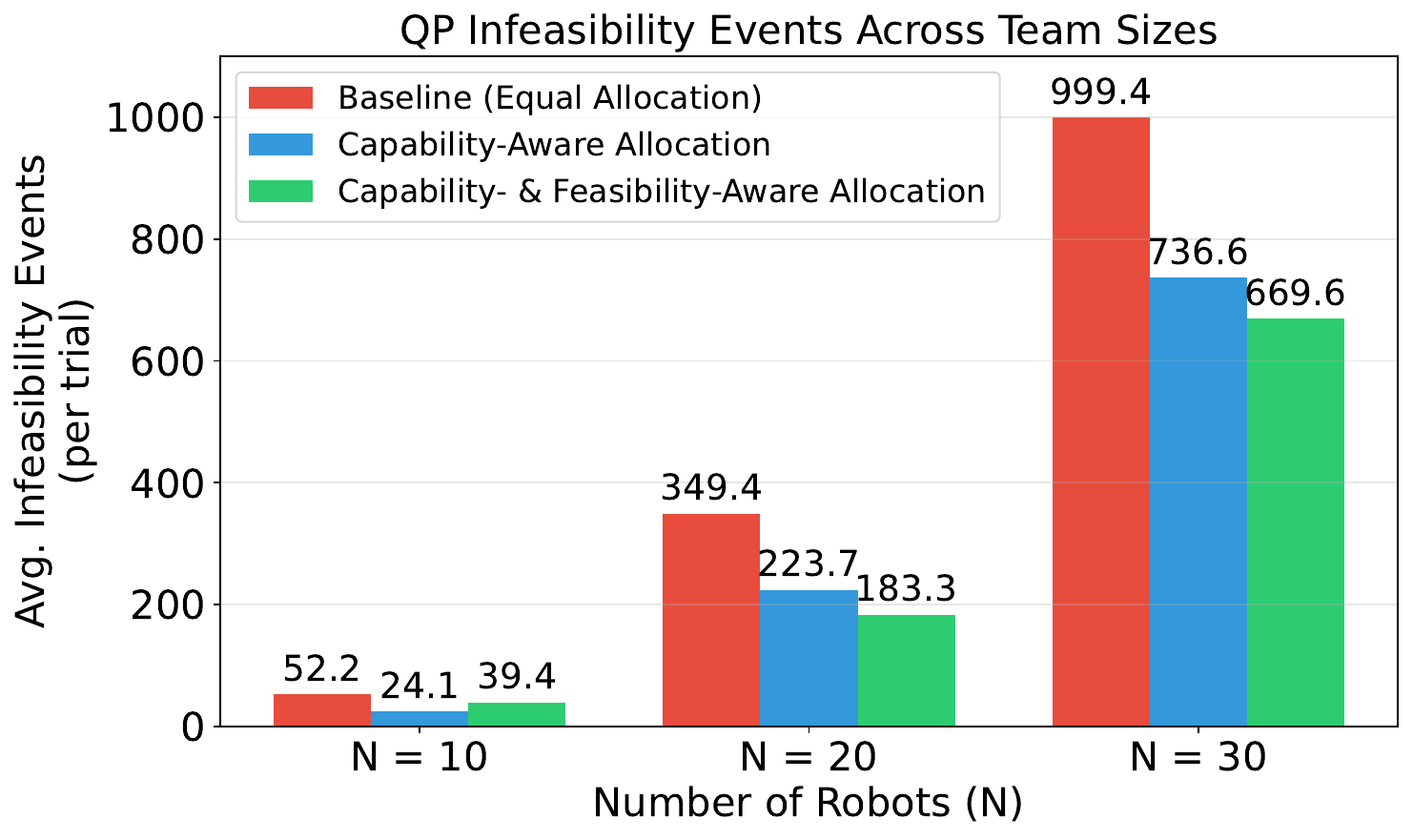}
    \caption{Average QP infeasibility events per trial across $N \in \{10, 20, 30\}$ for three allocation strategies. The proposed capability- and feasibility-aware allocation reduces failures, with the gap widening as $N$ increases.}
    \label{fig:qp_failures}
    \vspace{-1.5em}
\end{figure}

While the results above demonstrate the overall effectiveness of 
CA-HCBF, we further isolate the contribution of the capability-aware allocation by comparing three variants: (1) {Equal} allocation ($\alpha_{ij} = \tfrac{1}{2}$) (red), (2) {Capability-Aware} allocation without feasibility clipping (blue), and (3) the full {CA-HCBF} with feasibility-aware clipping (green). All other components are held identical.

As shown in Fig.~\ref{fig:qp_failures}, QP infeasibility events increase with $N$ across all variants, reflecting the growing 
difficulty of decentralized feasibility in denser environments. 
At $N=30$, the proposed method reduces infeasibility events from 
999.4 (equal) to 669.6 (33\% reduction), directly translating to 
fewer safety violations: from 8.1 to 5.1. At $N=10$, the 
capability-only variant achieves the fewest infeasibility events 
(24.1 vs.\ 39.4), as feasibility clipping introduces small overhead 
under sparse interactions. At larger $N$, however, enforcing 
$[\alpha_{\min}, \alpha_{\max}]$ becomes increasingly critical, with 
the full method achieving the lowest counts (183.3 and 669.6 at 
$N=20$ and $N=30$). These results confirm that capability-aware 
allocation effectively mitigates QP infeasibility and its downstream 
effect on safety violations, where kinematic constraints and a growing 
number of pairwise interactions jointly restrict the feasible solution 
space in decentralized heterogeneous deployments. \vspace{-0.4em}

\subsection{Physical Robot Validation}
\label{subsec:physical}
\vspace{-3pt}

To confirm that the simulation results transfer to real hardware, we validate on five LIMO Pro robots~\cite{limopro}, each configured to operate as a different kinematic class (DI, UNI, DD, CL, FO), with $v_{\max} = 0.3\,\mathrm{m/s}$, safety margin $0.02\,\mathrm{m}$, and body size $322 \times 222\,\mathrm{mm}$ for safe operation.\footnote{All robots are equipped with omni-directional wheels supporting both holonomic and nonholonomic operation via software-level mode switching. As LIMO Pro accepts only velocity-level commands, acceleration outputs are converted by multiplying by $\Delta t$ at each control step.} As shown in Fig.~\ref{fig:dialogue}, all five robots are initialized symmetrically and assigned goals at opposite sides, requiring simultaneous navigation through a shared central region. CA-HCBF successfully guides all robots to their goals while maintaining safe separation, whereas sRCBF deadlocks in the central region due to overly conservative worst-case assumptions. These results confirm that the proposed method transfers effectively to physical deployment. \vspace{-0.2em}

\section{Conclusion}
\vspace{-3pt}

We propose CA-HCBF, a decentralized CBF framework that provides a unified second-order formulation mapping five kinematic classes (DI, UNI, DD, CL, FO) into a common operational space, addressing inconsistent CBF formulation across heterogeneous dynamic models while preserving forward invariance of the pairwise safe set. We further introduce a support-function-based capability-aware responsibility allocation with feasibility-aware clipping, alleviating QP infeasibility that arises when multi-robot interactions and kinematic constraints collectively exceed a robot's feasibility range. Simulations with up to 30 heterogeneous robots and a physical five-robot experiment validate the proposed framework, demonstrating improved safety and task efficiency over representative baselines.
Despite its effectiveness, this approach is not without limitations. The canonical kinematic model abstracts certain low-level dynamics (e.g., ideal steering acceleration), and feasibility-aware clipping cannot fully eliminate QP infeasibility when the combined capability of peer agents is insufficient to satisfy total constraint demand in dense or constrained scenarios. Future work will therefore extend pairwise responsibility and capability reasoning to the neighborhood level, enabling more jointly consistent avoidance decisions in decentralized settings.\vspace{-0.5em}



%
\bibliographystyle{IEEEtran} \footnotesize
\bibliography{references}

\end{document}